\pdfoutput=1
\documentclass{neus2025}
\usepackage{booktabs}
\usepackage{wrapfig}
\usepackage{algorithm}
\usepackage{algorithmic}
\usepackage{booktabs}
\usepackage{multirow}
\usepackage{amsmath}

\title[Logic-Guided Vector Fields for Constrained Generative Modeling]{Logic-Guided Vector Fields for Constrained Generative Modeling}
\usepackage{times}

\author{%
 \Name{Ali Baheri} \Email{akbeme@rit.edu}\\
 \addr Rochester Institute of Technology, Rochester, NY
}

\begin{document}

\maketitle

\vspace{-10 mm}
\begin{abstract}
Neuro-symbolic systems aim to combine the expressive structure of symbolic logic with the flexibility of neural learning; yet, generative models typically lack mechanisms to \emph{enforce} declarative constraints at generation time. We propose Logic-Guided Vector Fields (LGVF), a neuro-symbolic framework that injects symbolic knowledge, specified as differentiable relaxations of logical constraints, into flow matching generative models. LGVF couples two complementary mechanisms: (1) a \emph{training-time} logic loss that penalizes constraint violations along continuous flow trajectories, with weights that emphasize correctness near the target distribution; and (2) an \emph{inference-time} adjustment that steers sampling using constraint gradients, acting as a lightweight, logic-informed correction to the learned dynamics. We evaluate LGVF on three constrained generation case studies spanning linear, nonlinear, and multi-region feasibility constraints. Across all settings, LGVF reduces constraint violations by 59–82\% compared to standard flow matching and achieves the lowest violation rates in each case. In the linear and ring settings, LGVF also improves distributional fidelity as measured by MMD. while in the multi-obstacle setting, we observe a satisfaction–fidelity trade-off, with improved feasibility but increased MMD. Beyond quantitative gains, LGVF yields constraint-aware vector fields exhibiting emergent obstacle-avoidance behavior, routing samples around forbidden regions without explicit path planning. 

\end{abstract}

\begin{keywords}
  Flow matching, constraint satisfaction, continuous normalizing flows, logic-guided generation
\end{keywords}


\section{Introduction}
Generative models are increasingly used as proposal mechanisms inside larger systems rather than as standalone density estimators. In robotics and planning, they generate candidate behaviors that can be evaluated or refined at test time \citep{janner2022planning,chi2023diffusion_policy}. In design and discovery settings, they propose candidates that must satisfy domain rules, where invalid outputs can waste evaluation budgets or break downstream pipelines \citep{brookes2019conditioning}. In all these applications, sample quality alone is not enough: what matters is whether a model produces \emph{valid} outputs under explicit requirements and does so reliably. Despite this need, most generative modeling pipelines remain largely \emph{constraint-blind}. Constraints are typically handled downstream via rejection sampling, projection, or task-specific constrained samplers. These workarounds are often expensive or brittle: rejection becomes impractical when feasible regions are small \citep{robert2004montecarlo}; projection can distort samples and behaves poorly when constraints are nonconvex or disconnected \citep{boyd2004convex}; and guidance-style methods provide heuristic control signals but do not, in general, guaranty satisfaction \citep{dhariwal2021diffusion}. As a result, even when the training data are fully valid, learned generators can still produce invalid samples: the rare failures that matter most in safety-critical settings.

This gap is particularly visible for continuous-time generative models. Diffusion and score-based models can be interpreted as transporting noise into data through a time-evolving denoising process \citep{ho2020denoising,song2021scorebased}. Flow-based formulations make this dynamical view explicit: generation corresponds to integrating learned continuous-time dynamics \citep{chen2018neuralode,grathwohl2019ffjord}, and flow matching offers a scalable objective for learning such dynamics \citep{lipman2023flow}, with closely related ``straightening'' variants such as rectified flow \citep{liu2023flowstraight}. However, standard training remains agnostic to feasibility. The learned transport can route probability mass through forbidden regions during generation, producing invalid samples even when the target distribution lies entirely in the valid set. Enforcing constraints only at the endpoint is therefore insufficient when the generation \emph{trajectory} itself crosses invalid regions, and naively adding penalties during training can be unreliable under complex multi-region constraints.

In this work, we study the following problem: \emph{How can we incorporate logical and geometric constraints into flow-matching generative models so that validity improves without sacrificing sample quality?} We propose LGVF, a lightweight neuro-symbolic framework that integrates constraints directly into the generative dynamics. LGVF assumes that constraints are specified as a logical predicate along with a differentiable measure of violation. It then enforces constraints using two complementary mechanisms. First, during training, LGVF encourages the learned dynamics to follow \emph{constraint-aware transport paths} by penalizing violations \emph{along the generation trajectory}, rather than only checking validity at the final sample. Second, during sampling, LGVF applies a small, scheduled \emph{logic adjustment} that steers trajectories away from violations, acting as a robust last-mile correction when training-time guidance alone is insufficient. This two-stage design couples global shaping of transport with local correction, yielding consistent improvements in validity across constraint types. 



\noindent \textbf{Contributions.} We show that unconstrained flow-based transport can traverse forbidden regions even when the data distribution lies entirely in the valid set. To address this, we propose \emph{Logic-Guided Vector Fields} (LGVF), which integrates symbolic constraints into flow matching via a trajectory-level logic loss during training and a lightweight gradient-based adjustment during sampling. Across linear, nonlinear, and multi-region constraints, LGVF consistently improves validity with minimal impact on sample quality and yields emergent constraint-aware transport (e.g., obstacle avoidance) in the learned dynamics.

\noindent \textbf{Paper Organization.} Section~\ref{sec:related} reviews related work. Section~\ref{sec:method} presents the LGVF methodology, including training objectives, sampling procedures, and constraint formulations. Section~\ref{sec:experiments} provides an experimental evaluation of three case studies. Section~\ref{sec:conclusion} concludes with a discussion and future directions.

\vspace{-6 mm}
\section{Related Work}
\label{sec:related}

\noindent \textbf{Flow-Based Generative Models.} Continuous normalizing flows (CNFs) \citep{chen2018neuralode} model generation as integrating a learned neural ODE, enabling exact likelihoods via the instantaneous change-of-variables formula, but at the cost of expensive ODE solves during training. Flow matching \citep{lipman2023flow, liu2023flowstraight} sidesteps this by regressing onto conditional vector fields along simple interpolation paths, achieving strong sample quality without simulation in the training loop. Rectified flows \citep{liu2022rectified} further encourage near-straight transport for few-step sampling, and optimal transport flow matching \citep{tong2024improving} improves efficiency by using minibatch optimal transport couplings. Closest to our setting, \citet{baheri2025metriplectic} extends conditional flow matching to dissipative dynamics via a metriplectic construction, highlighting how structural priors can be imposed directly on learned vector fields. We build on conditional flow matching and similarly modify the learned dynamics, but our focus is \emph{constraint satisfaction}: we incorporate differentiable logical constraints through a training-time penalty and a lightweight sampling-time correction.

\noindent \textbf{Guided Generation.}
Guidance methods steer generative processes by adding gradients of auxiliary objectives. In diffusion models, classifier guidance \citep{dhariwal2021diffusion} injects classifier gradients into the score to enable high-quality conditional generation, while classifier-free guidance \citep{ho2022classifierfree} achieves similar control without a separate classifier and has become standard in text-to-image systems \citep{rombach2022latent, saharia2022photorealistic}. Universal guidance \citep{bansal2023universal} generalizes this principle to arbitrary differentiable objectives. LGVF follows the same high-level idea of gradient-based steering but targets \emph{hard} constraint satisfaction rather than soft conditional preferences and operates on flow-matching vector fields. From a safety perspective, our inference-time adjustment is analogous in spirit to gradient-based ``safety shaping'' used in decision-making: for instance, \citet{yifru2024concurrent} learning policies jointly with unknown safety constraints, and \citet{baheri2025distributionally} enforcing safety and stability under uncertainty via Lyapunov–barrier objectives. LGVF adapts this notion of constraint-driven correction to generative transport dynamics.


\noindent \textbf{Flow-Based Generative Models.}
Continuous normalizing flows (CNFs) \citep{chen2018neuralode} parameterize generative models as neural ODEs, enabling exact likelihood computation through the instantaneous change of variables formula. However, CNFs require expensive ODE integration during training. Flow matching \citep{lipman2023flow, liu2023flowstraight} addresses this limitation by regressing onto conditional vector fields along simple interpolation paths, avoiding simulation during training while achieving state-of-the-art sample quality. Rectified flows \citep{liu2022rectified} further simplify learned trajectories for few-step generation, and optimal transport flow matching \citep{tong2024improving} improves efficiency using minibatch optimal transport couplings. Our work builds on conditional flow matching, augmenting it with constraint-aware training and sampling.

\noindent \textbf{Guided Generation.}
Classifier guidance \citep{dhariwal2021diffusion} steers diffusion models toward desired classes by adding classifier gradients to the score function, enabling high-quality conditional generation. Classifier-free guidance \citep{ho2022classifierfree} achieves similar effects without a separate classifier by jointly training conditional and unconditional models. These approaches have become standard in text-to-image systems \citep{rombach2022latent, saharia2022photorealistic}. Universal guidance \citep{bansal2023universal} extends this idea to arbitrary differentiable objectives. LGVF shares the principle of gradient-based steering but targets hard constraint satisfaction rather than soft conditional preferences and operates on flows rather than diffusion models.

\noindent \textbf{Constrained Generation.}
Several works address constraints in generative models. Reflected diffusion \citep{lou2023reflected} handles boundary constraints by reflecting the diffusion process at domain boundaries. Mirror diffusion \citep{liu2023mirror} uses mirror maps for simplex-constrained generation. \citet{fishman2023diffusion} incorporates manifold constraints for molecular generation. These approaches handle specific constraint types, whereas LGVF provides a framework for arbitrary differentiable constraints through a unified violation function formulation. Concurrent work on constrained diffusion \citep{khalafi2024constrained} explores Lagrangian methods for inequality constraints.

\begin{figure}[t]
    \centering
    \includegraphics[trim=0 120 0 0, clip, width=\textwidth]{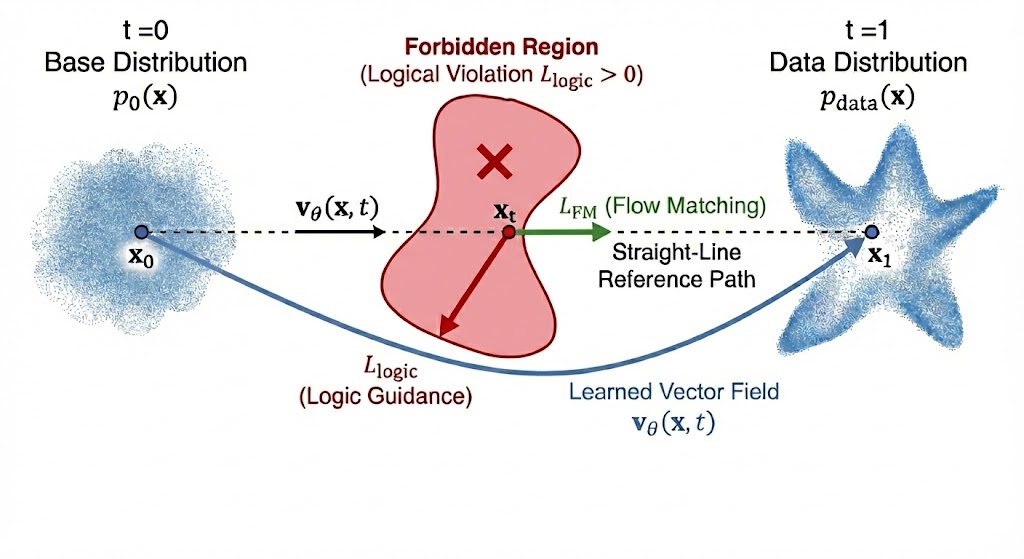}
    \caption{\textbf{LGVF learns constraint-aware vector fields.} 
    While standard flow matching follows straight-line paths (dashed) that may traverse forbidden regions, LGVF combines flow matching loss $\mathcal{L}_{\text{FM}}$ with logic guidance $\mathcal{L}_{\text{logic}}$ to learn vector fields that route trajectories around constraint violations. The logic loss penalizes violations at intermediate states $\mathbf{x}_t$, encouraging the model to discover valid paths from base distribution $p_0$ to target $p_{\text{data}}$.}
    \vspace{-10 mm}
    \label{fig:concept}
\end{figure}


\vspace{-4 mm}
\section{Methodology}
\label{sec:method}

We present Logic-Guided Vector Fields (LGVF), a framework for incorporating logical constraints into flow-based generative models. Our approach operates at two stages: a training-time loss that penalizes constraint violations along flow trajectories, and an inference-time adjustment that corrects the learned vector field during sampling. Let $p_{\text{data}}(x)$ denote a target distribution over $\mathbb{R}^d$ and let $\phi: \mathbb{R}^d \rightarrow \{\text{True}, \text{False}\}$ be a logical constraint that valid samples must satisfy. Our goal is to learn a generative model producing samples $x \sim p_\theta(x)$ such that $p_\theta$ approximates $p_{\text{data}}$ in the valid region while achieving $\mathbb{P}[\phi(x) = \text{True}] \approx 1$. We assume access to a differentiable relaxation of the constraint, expressed as a violation function $\ell_{\text{logic}}: \mathbb{R}^d \rightarrow \mathbb{R}_{\geq 0}$ satisfying $\ell_{\text{logic}}(x) = 0$ if and only if $\phi(x) = \text{True}$. The magnitude of $\ell_{\text{logic}}(x)$ reflects the degree of violation, and we require $\ell_{\text{logic}}$ to be differentiable almost everywhere to enable gradient-based optimization. Many constraints of practical interest admit natural relaxations: linear constraints $a^\top x \geq b$ become $\ell(x) = \max(0, b - a^\top x)$ and spherical constraints $\|x - c\| \leq r$ become $\ell(x) = \max(0, \|x-c\| - r)$. 


Flow matching \citep{lipman2023flow, liu2023flowstraight} learns a continuous-time flow that transforms a base distribution $p_0$ (typically $\mathcal{N}(0, I)$) into the target $p_1 = p_{\text{data}}$. The flow is defined by an ordinary differential equation $dx_t/dt = v_\theta(x_t, t)$ with $x_0 \sim p_0$ and $t \in [0, 1]$, where $v_\theta$ is a learnable vector field. Samples are generated by integrating the ODE from $t=0$ to $t=1$. Direct optimization of the marginal flow is intractable, so conditional flow matching regresses onto conditional vector fields. Given paired samples $(x_0, x_1)$ with $x_0 \sim p_0$ and $x_1 \sim p_{\text{data}}$, the conditional path is $x_t = (1 - t) x_0 + t x_1$ with conditional velocity $u_t = x_1 - x_0$. The flow matching objective minimizes
\begin{equation}
    \mathcal{L}_{\text{FM}}(\theta) = \mathbb{E}_{t \sim \mathcal{U}[0,1], x_0 \sim p_0, x_1 \sim p_{\text{data}}} \left[ \| v_\theta(x_t, t) - (x_1 - x_0) \|^2 \right].
    \label{eq:fm_loss}
\end{equation}
Standard flow matching provides no mechanism for constraint enforcement. Even if target samples satisfy $\phi(x_1) = \text{True}$, intermediate states $x_t$ may violate constraints, and the learned flow may route samples through forbidden regions.

\noindent \textbf{Logic-Guided Vector Fields.} Our key idea is that constraints should be enforced along the entire flow trajectory, not just at the endpoint. Figure~\ref{fig:concept} illustrates this idea. While standard flow matching follows the straight-line reference path from $x_0$ to $x_1$ (dashed line), this path may pass through forbidden regions where $\ell_{\text{logic}} > 0$. LGVF introduces a logic loss that penalizes violations at intermediate states $x_t$, creating a downward gradient (red arrow) that pushes the trajectory away from constraint violations. Combined with the flow matching objective (green arrow), the learned vector field $v_\theta(x, t)$ traces a curved path that avoids forbidden regions while still transporting samples to the target distribution. We augment flow matching with a time-weighted logic loss:
\begin{equation}
    \mathcal{L}_{\text{LGVF}}(\theta) = \underbrace{\mathcal{L}_{\text{FM}}(\theta)}_{\text{Flow Matching}} + \underbrace{\mathbb{E}_{t, x_0, x_1} \left[ \lambda(t) \cdot \ell_{\text{logic}}(x_t) \right]}_{\text{Logic Guidance}}
    \label{eq:lgvf_loss}
\end{equation}
where $x_t = (1-t)x_0 + tx_1$ and $\lambda(t) = \lambda_{\max} \cdot t^\alpha$ is a time-dependent weight.\footnote{We use $\alpha = 1$ (linear weighting) in all experiments.} As shown in Figure~\ref{fig:concept}, the flow matching term $\mathcal{L}_{\text{FM}}$ encourages the vector field to point toward the target $x_1$, while the logic guidance term penalizes trajectories that enter forbidden regions. The time-dependent weighting $\lambda(t)$ reflects the geometry of flow matching: at $t \approx 0$, samples lie near the base distribution where constraint violations are often unavoidable; at $t \approx 1$, samples approach the target where constraints must hold.Algorithm~\ref{alg:training} summarizes the training procedure.

\begin{algorithm}[t]
\caption{LGVF Training}
\label{alg:training}
\begin{algorithmic}[1]
\REQUIRE Dataset $\mathcal{D}$, base distribution $p_0$, violation function $\ell_{\text{logic}}$, batch size $B$, hyperparameters $\lambda_{\max}, \alpha$
\STATE Define weighting schedule $\lambda(t) \triangleq \lambda_{\max} t^{\alpha}$
\WHILE{not converged}
    \STATE Sample $\{x_1^{(i)}\}_{i=1}^B \sim \mathcal{D}$, $\{x_0^{(i)}\}_{i=1}^B \sim p_0$, $\{t^{(i)}\}_{i=1}^B \sim \mathrm{Unif}(0,1)$
    \STATE Compute $x_t^{(i)} = (1 - t^{(i)}) x_0^{(i)} + t^{(i)} x_1^{(i)}$
    \STATE $\mathcal{L}_{\text{FM}} \gets \frac{1}{B} \sum_{i=1}^B \| v_\theta(x_t^{(i)}, t^{(i)}) - (x_1^{(i)} - x_0^{(i)}) \|^2$
    \STATE $\mathcal{L}_{\text{logic}} \gets \frac{1}{B} \sum_{i=1}^B \lambda(t^{(i)}) \cdot \ell_{\text{logic}}(x_t^{(i)})$
    \STATE Update $\theta$ to minimize $\mathcal{L}_{\text{FM}} + \mathcal{L}_{\text{logic}}$ (e.g., Adam)
\ENDWHILE
\RETURN $v_\theta$
\end{algorithmic}
\end{algorithm}

\noindent \textbf{Logic-Adjusted Sampling.} Training-time guidance encourages constraint-aware vector fields but cannot guarantee zero violations, particularly for complex geometries. We complement LGVF training with inference-time adjustment that directly corrects the flow during sampling. During ODE integration, we modify the vector field by subtracting the constraint gradient:
\begin{equation}
    \tilde{v}(x_t, t) = v_\theta(x_t, t) - \eta(t) \cdot \nabla_x \ell_{\text{logic}}(x_t)
    \label{eq:adjusted}
\end{equation}
where $\eta(t) \geq 0$ controls adjustment strength. The correction term $-\nabla_x \ell_{\text{logic}}(x_t)$ points toward the direction of the steepest decrease in violation, pushing samples toward valid regions. We apply adjustment only at later times when samples approach the target:
\begin{equation}
    \eta(t) = \begin{cases}
        0 & \text{if } t \leq t_0 \\
        \eta_{\max} \cdot \left( \frac{t - t_0}{1 - t_0} \right)^2 & \text{if } t > t_0
    \end{cases}
\end{equation}
with threshold $t_0 = 0.3$. The quadratic ramp-up ensures smooth correction without abrupt changes that could destabilize integration. Algorithm~\ref{alg:sampling} presents the complete sampling procedure. The adjusted flow solves a modified ODE that simultaneously performs density transport (via $v_\theta$) and constraint optimization (via gradient descent on $\ell_{\text{logic}}$). When $\eta(t)$ is small relative to $\|v_\theta\|$, the adjustment acts as a perturbation that nudges samples toward validity without significantly altering the learned distribution.

\begin{algorithm}[t]
\caption{LGVF Sampling with Logic Adjustment}
\label{alg:sampling}
\begin{algorithmic}[1]
\REQUIRE Trained $v_\theta$, violation function $\ell_{\text{logic}}$, steps $K$, parameters $\eta_{\max}, t_0$ (with $0 \le t_0 < 1$)
\STATE Sample $x_0 \sim p_0$, set $x \gets x_0$, and $\Delta t \gets 1/K$
\FOR{$k = 0, \ldots, K-1$}
    \STATE $t \gets k\,\Delta t$
    \STATE $v \gets v_\theta(x, t)$
    \IF{$t > t_0$}
        \STATE $\eta \gets \eta_{\max} \cdot \left(\frac{t - t_0}{1 - t_0}\right)^2$
        \STATE $v \gets v - \eta \cdot \nabla_x \ell_{\text{logic}}(x)$ \COMMENT{gradient descent on violation}
    \ENDIF
    \STATE $x \gets x + \Delta t \cdot v$ \COMMENT{Euler step}
\ENDFOR
\RETURN $x$ \COMMENT{$\approx x_1$}
\end{algorithmic}
\end{algorithm}

\section{Experiments}
\label{sec:experiments}

We evaluated LGVF under three types of constraints of increasing complexity in 2D, followed by a scalability study in higher dimensions.

\noindent \textbf{Experimental Setup.} We parameterize the vector field $v_\theta(x, t)$ using a 3-layer MLP with 128 hidden units and ReLU activations. Time is concatenated directly with spatial coordinates as input. All models are trained with Adam using a learning rate $3 \times 10^{-3}$ for 8,000 iterations with a batch size of 256. For LGVF, we use time-dependent weighting $\lambda(t) = \lambda_{\max} \cdot t$ with $\lambda_{\max} \in [10, 15]$ depending on constraint difficulty. Samples are generated via Euler integration with 100 steps. Logic-adjusted sampling applies correction for $t > 0.3$ with strength $\eta(t) = \eta_{\max} \cdot ((t - 0.3)/0.7)^2$ where $\eta_{\max} \in [0.5, 1.5]$. We report three metrics computed over 2,000 generated samples: (1) \emph{Violation Rate}, the percentage of samples failing to satisfy the constraint; (2) \emph{Average Violation}, the mean constraint violation magnitude across all samples; and (3) \emph{MMD}, Maximum Mean Discrepancy with Gaussian kernel ($\sigma = 1.0$) measuring distributional fidelity to the target.

\noindent \textbf{Case Study 1: Linear Half-Plane Constraint.} Our first case study considers the linear constraint $\phi(x) \equiv (x_1 + x_2 \geq 0)$, partitioning $\mathbb{R}^2$ into valid and forbidden half-planes. The target distribution is a mixture of two Gaussians with means $\mu_1 = (-1.5, 2.0)$ and $\mu_2 = (2.0, 0.5)$, both in the valid region, with $\sigma = 0.4$. The differentiable violation function is $\ell_{\text{logic}}(x) = \max(0, -(x_1 + x_2))$.

Figure~\ref{fig:linear}(a) shows the target distribution with the forbidden region shaded. Flow Matching (b) achieves a 2.2\% violation rate, with violations occurring near the diagonal boundary between the two modes. LGVF (c) provides modest improvement to 2.0\%, while LGVF + Adjusted (d) reduces violations to 0.4\%, an 82\% reduction over the baseline. The violations in Flow Matching arise because some trajectories from the origin cut through the forbidden region when transporting mass to the upper-left mode. LGVF learns to bend these trajectories, and inference-time adjustment eliminates nearly all remaining violations.

\begin{figure}[t]
\centering
\includegraphics[width=\textwidth]{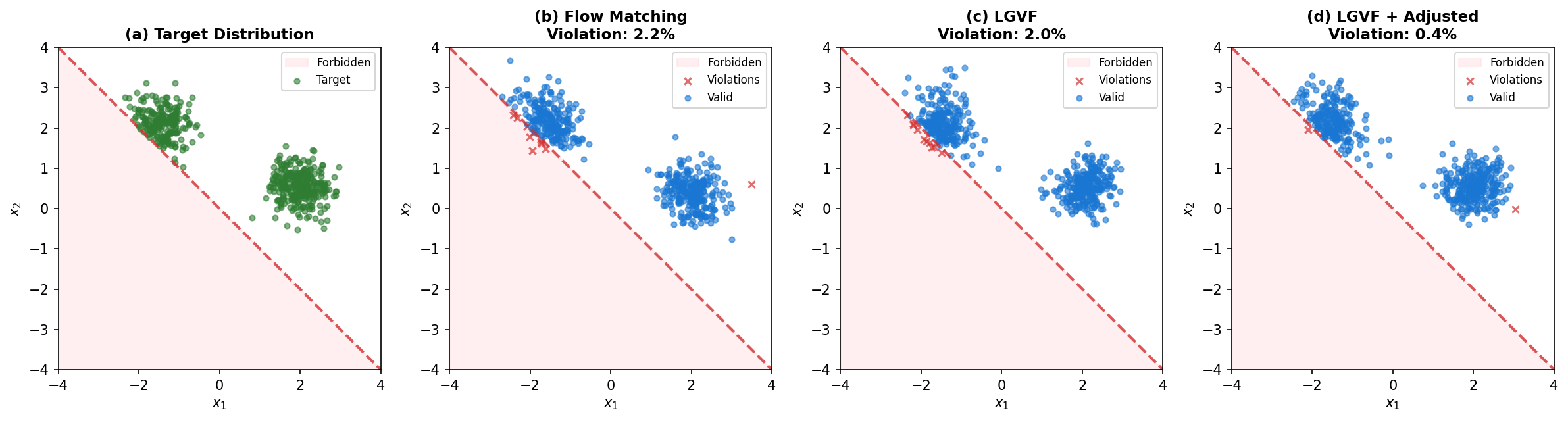}
\caption{\textbf{Case Study 1: Linear half-plane constraint} $(x_1 + x_2 \geq 0)$. (a) Target distribution with two Gaussian modes in the valid region. (b) Flow Matching: 2.2\% violations near the diagonal boundary. (c) LGVF: slight improvement to 2.0\%. (d) LGVF + Adjusted: 0.4\% violations (82\% reduction). Red crosses indicate violations; blue dots are valid samples.}
\vspace{-3 mm}
\label{fig:linear}
\end{figure}

\begin{figure}[t]
\centering
\includegraphics[width=\textwidth]{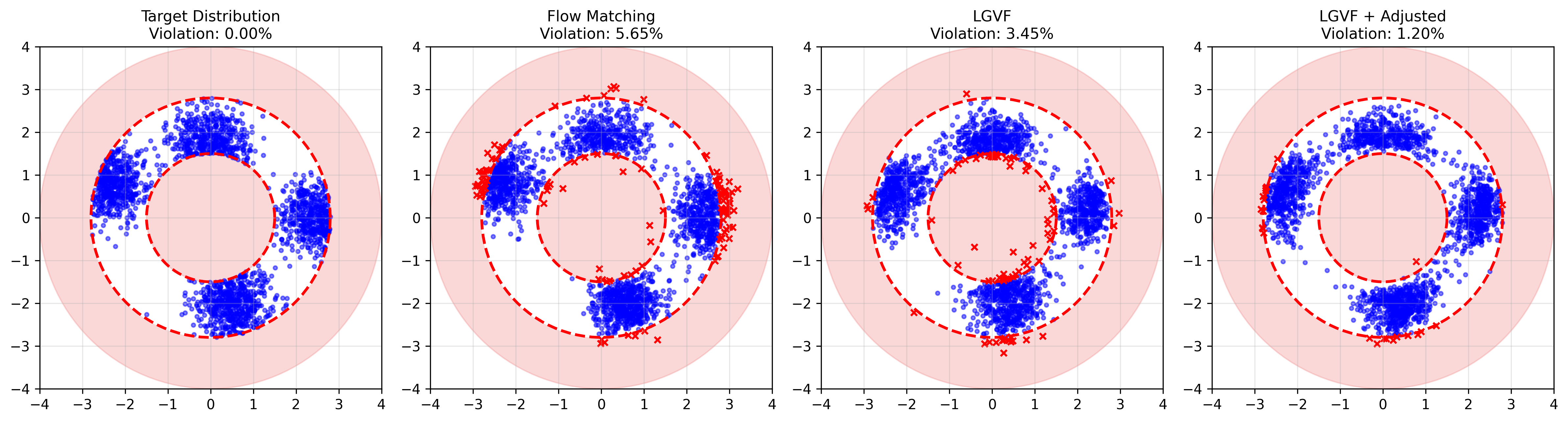}
\caption{\textbf{Case Study 2: Nonlinear ring constraint} $(1.5 \leq \|x\| \leq 2.8)$. The valid region is the white annulus; inner disk and outer region are forbidden. Flow Matching produces 5.65\% violations in both regions. LGVF reduces this to 3.45\%, and LGVF + Adjusted achieves 1.20\% (79\% reduction), with violations nearly eliminated at both boundaries.}
\vspace{-6 mm}
\label{fig:ring}
\end{figure}

\begin{figure}[t]
\centering
\includegraphics[width=\textwidth]{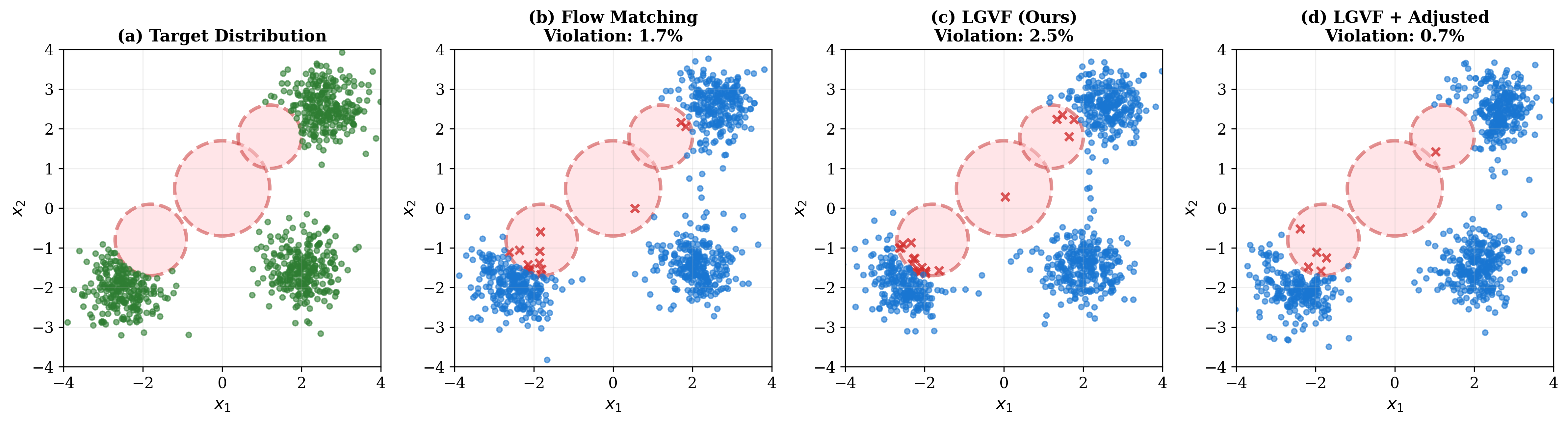}
\caption{\textbf{Case Study 3: Multi-obstacle avoidance.} Three circular obstacles (pink) block direct paths to target modes (green in panel a). Flow Matching: 1.7\% violations. LGVF alone increases violations to 2.5\% due to complex non-convex geometry, but LGVF + Adjusted recovers to 0.7\% (59\% improvement), demonstrating the robustness of inference-time correction.}
\vspace{-9 mm}
\label{fig:obstacles}
\end{figure}

\noindent \textbf{Case Study 2: Nonlinear Ring Constraint.} To evaluate LGVF on nonlinear constraints, we consider an annulus constraint $\phi(x) \equiv (r_{\min} \leq \|x\| \leq r_{\max})$ with $r_{\min} = 1.5$ and $r_{\max} = 2.8$. This constraint requires samples to lie within a ring of width 1.3, avoiding both the inner disk and the outer region. The target consists of four Gaussian modes positioned near the ring boundaries to create a challenging scenario, with $\sigma = 0.45$ chosen to induce boundary violations. The violation function is $\ell_{\text{logic}}(x) = \max(0, r_{\min} - \|x\|) + \max(0, \|x\| - r_{\max})$. Figure~\ref{fig:ring} shows the results. This case study is more challenging, with Flow Matching achieving a 5.65\% violation rate; violations occur both in the inner disk (samples that fail to reach a sufficient radius) and beyond the outer boundary (samples that overshoot). LGVF reduces violations to 3.45\% through training-time guidance, while LGVF + Adjusted achieves 1.20\%. The improvement is particularly visible in the outer boundary region, where the inference-time adjustment successfully pulls back samples that would otherwise exceed $r_{\max}$.

\noindent \textbf{Case Study 3: Multi-Obstacle Avoidance.} The third case study involves three circular obstacles, motivated by robotic path planning. The forbidden regions are: a central obstacle at $(0, 0.5)$ with a radius of 1.2, a lower-left obstacle at $(-1.8, -0.8)$ with a radius of 0.9, and an upper-right obstacle at $(1.2, 1.8)$ with a radius of 0.8. The target distribution consists of three Gaussian modes positioned such that straight-line paths from the origin intersect obstacles: $\mu_1 = (-2.5, -2.0)$, $\mu_2 = (2.5, 2.5)$, and $\mu_3 = (2.0, -1.5)$. Figure~\ref{fig:obstacles} presents the results. Flow Matching achieves a violation rate of 1.7\%, with violations distributed among all three obstacles. Interestingly, LGVF alone \emph{increases} violations to 2.5\%; we attribute this to the complex, non-convex loss landscape created by multiple disjoint forbidden regions, which can impede gradient-based training. However, inference-time adjustment successfully compensates: LGVF + Adjusted reduces violations to 0.7\%, a 59\% improvement over the baseline.

\begin{table}[b!]
\centering
\vspace{-5 mm}
\caption{\textbf{Summary of 2D results.} LGVF + Adjusted achieves the lowest violation rates across all constraints. Fidelity (MMD; $\times 10^{-3}$, lower is better) is preserved/improved in Linear and Ring, but degrades in the multi-obstacle setting, indicating a satisfaction--fidelity trade-off. Relative improvement is computed w.r.t.\ Flow Matching on Viol.\ (\%).}
\label{tab:results}
\vspace{0.5em}
\small
\begin{tabular}{llcccc}
\toprule
\textbf{Case Study} & \textbf{Method} & \textbf{Viol.\ (\%)} $\downarrow$ & \textbf{Avg.\ Viol.} $\downarrow$ & \textbf{MMD} $(\times 10^{-3})$ $\downarrow$ & \textbf{Improv.\ (Viol.\%)} \\
\midrule
\multirow{3}{*}{\shortstack[l]{1: Linear\\$(x_1 + x_2 \geq 0)$}}
& Flow Matching & 2.20 & 0.0044 & 0.89 & --- \\
& LGVF & 2.00 & 0.0038 & 0.99 & 9\% \\
& LGVF + Adjusted & \textbf{0.40} & \textbf{0.0008} & \textbf{0.29} & \textbf{82\%} \\
\midrule
\multirow{3}{*}{\shortstack[l]{2: Ring\\$(1.5 \leq \|x\| \leq 2.8)$}}
& Flow Matching & 5.65 & 0.0066 & 1.35 & --- \\
& LGVF & 3.45 & 0.0038 & 0.86 & 39\% \\
& LGVF + Adjusted & \textbf{1.20} & \textbf{0.0005} & \textbf{0.51} & \textbf{79\%} \\
\midrule
\multirow{3}{*}{\shortstack[l]{3: Obstacles\\(3 circles)}}
& Flow Matching & 1.70 & 0.0041 & \textbf{0.40} & --- \\
& LGVF & 2.50 & 0.0046 & 0.68 & $-47\%$ \\
& LGVF + Adjusted & \textbf{0.70} & \textbf{0.0023} & 0.82 & \textbf{59\%} \\
\bottomrule
\end{tabular}
\end{table}

\noindent \textbf{Case Study 4: High-Dimensional Scaling.} To evaluate whether LGVF scales beyond 2D, we test the linear constraint $\phi(x) \equiv (\sum_{i=1}^{d} x_i \geq 0)$ for dimensions $d \in \{10, 25, 50, 100\}$. The target distribution is a mixture of four Gaussians positioned in the valid half-space. We increase the hidden dimension to $\min(256, 4d)$ for higher-dimensional settings. Figure~\ref{fig:highdim} shows the results. As dimensionality increases, Flow Matching violations grow substantially, from 1.5\% at $d=10$ to 14.8\% at $d=100$, reflecting the increased difficulty of learning constraint-satisfying flows in high dimensions. LGVF alone provides variable improvement, effectively reducing violations at $d=100$ (14.8\% $\to$ 3.2\%) but less so at $d=50$ (9.1\% $\to$ 7.0\%). Specifically, LGVF + Adjusted maintains near-zero violation rates in all dimensions: 0.0\% at $d=10$ and $d=25$, 0.4\% at $d=50$, and 0.1\% at $d=100$. This shows that the inference-time gradient correction scales effectively, achieving 96–100\% violation reduction even as the baseline degrades with dimensionality.

Table~\ref{tab:results} summarizes the results in all 2D case studies. LGVF + Adjusted consistently achieves the lowest violation rates, with improvements ranging from 59\% to 82\% relative to Flow Matching. The training-time for LGVF alone provides benefits for convex constraint geometries (Case Studies 1 and 2) but can struggle with multi-region non-convex constraints (Case Study 3). The inference-time adjustment is uniformly beneficial, providing robust violation reduction regardless of constraint complexity. In particular, MMD scores remain comparable or improve with LGVF, indicating that there is no trade-off between constraint satisfaction and distributional fidelity. Combined with the high-dimensional results (Figure~\ref{fig:highdim}), these experiments demonstrate that LGVF provides reliable constraint enforcement on various geometries and scales effectively from 2D to 100D.

\begin{wrapfigure}{r}{0.52\textwidth}
  \centering
  \includegraphics[width=\linewidth]{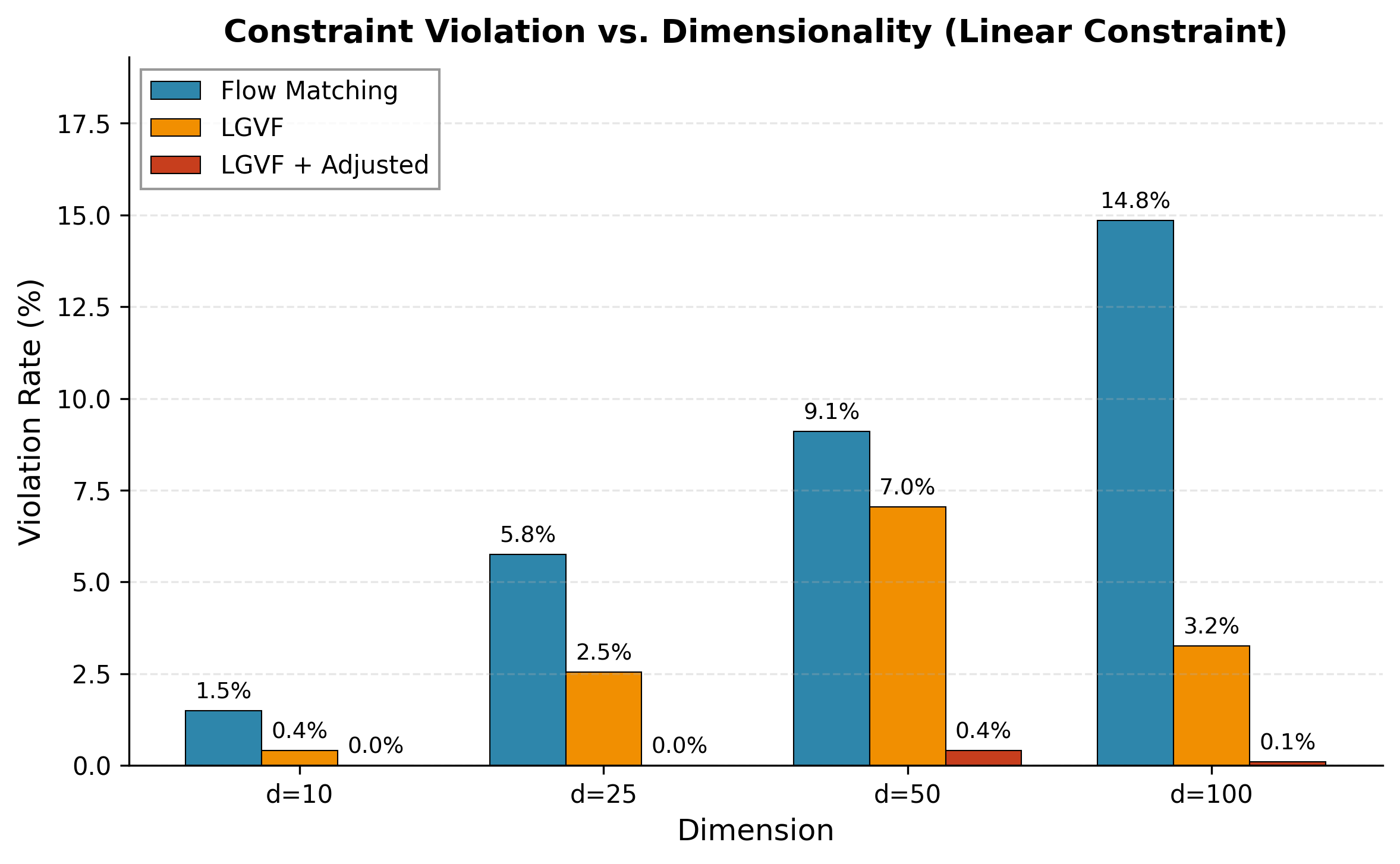}
  \vspace{-9 mm}
  \caption{Case Study 4: Scaling to high dimensions. Flow Matching violations increase with dimension, while LGVF + Adjusted maintains near-zero violations.}
  \label{fig:highdim}
\end{wrapfigure}

\noindent \textbf{Ablation Study.} We conduct ablation studies on the linear constraint case to understand hyperparameter sensitivity and component contributions. All experiments report the mean and standard deviation over 3 independent runs (Figure~\ref{fig:ablation}). Figure~\ref{fig:ablation}(a) shows the effect of training-time weight $\lambda_{\max}$. Violations decrease monotonically with stronger guidance: from 1.78\% at $\lambda_{\max}{=}0$ (no training-time logic loss) to 0.20\% at $\lambda_{\max}{=}10$, reaching zero at $\lambda_{\max}{\geq}20$. This confirms that the logic loss effectively teaches the vector field to avoid forbidden regions, with the method remaining robust even at high values ($\lambda_{\max}{=}50$).

Figure~\ref{fig:ablation}(b) compares inference-time adjustment strength $\eta_{\max}$ for both LGVF + Adjusted and FM + Adjusted (correction applied to standard flow matching). Both methods improve with stronger adjustment, but LGVF + Adjusted consistently outperforms FM + Adjusted at every setting, achieving zero violations at $\eta_{\max}{=}0.5$ compared to $\eta_{\max}{=}2.0$ for FM + Adjusted. This demonstrates that training-time and inference-time guidance are complementary: the learned constraint-aware vector field provides a better starting point for gradient correction. Figure~\ref{fig:ablation}(c) examines adjustment start time $t_0$. Earlier correction ($t_0 {\leq} 0.3$) achieves zero violations, while delaying to $t_0{=}0.9$ yields 0.07\%. The method is robust to this choice, and we use $t_0{=}0.3$ as default. Figure~\ref{fig:ablation}(d) isolates component contributions. From the FM baseline (0.50\%), adding training-time guidance alone (LGVF) reduces violations to 0.20\%, while inference-time adjustment alone (FM + Adj) achieves 0.10\%. Combining both (LGVF + Adj) yields 0.02\%, a 96\% improvement.

\begin{figure}[t]
    \centering
    \includegraphics[width=\textwidth]{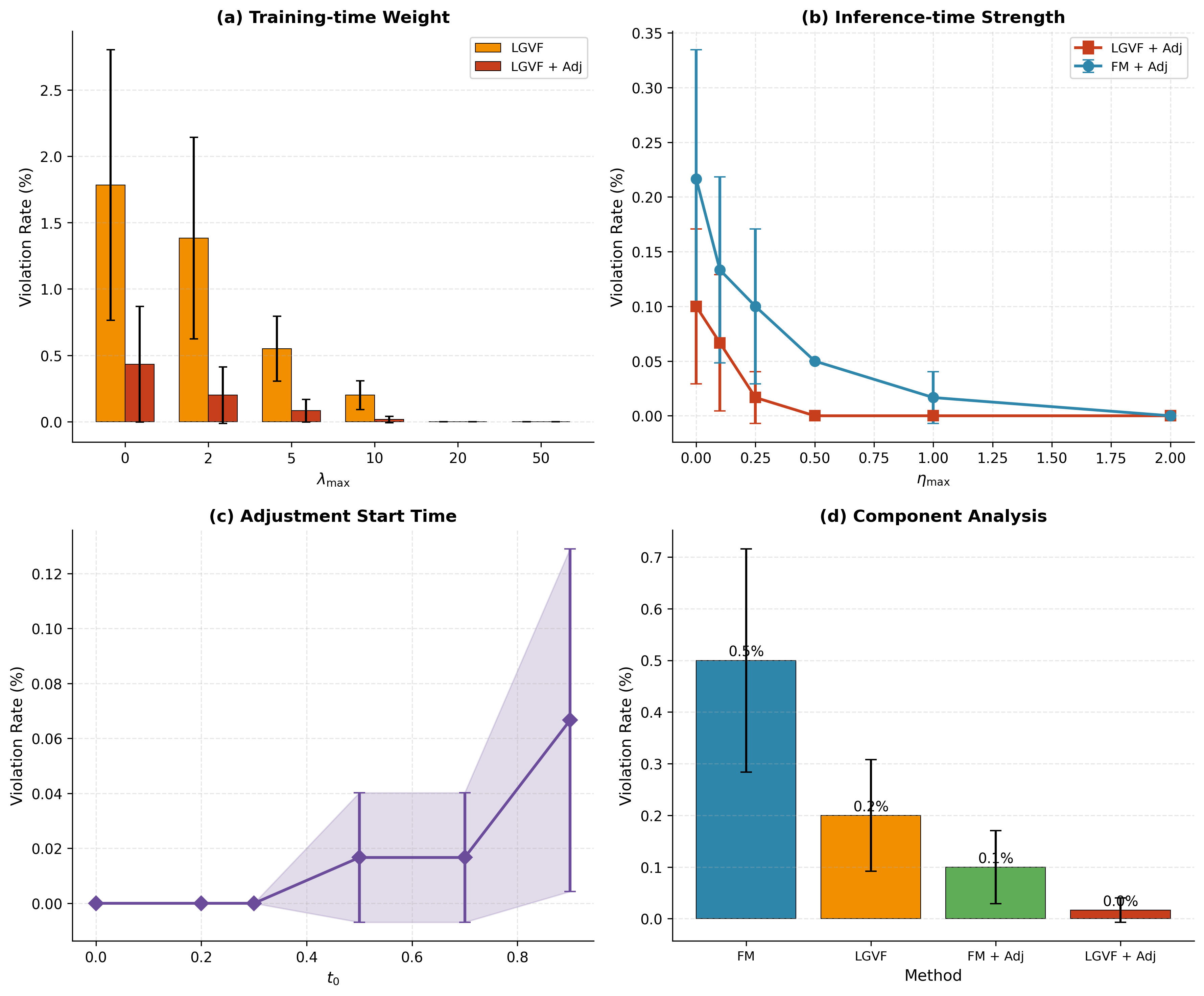}
    \caption{\textbf{Ablation studies on the linear constraint.} 
    (a) Training-time weight $\lambda_{\max}$: violations decrease monotonically with stronger guidance.
    (b) Inference-time strength $\eta_{\max}$: LGVF + Adj consistently outperforms FM + Adj across all settings.
    (c) Adjustment start time $t_0$: earlier correction is slightly more effective, but performance is robust.
    (d) Component analysis: combining training and inference guidance achieves the best results (0.02\% vs.\ 0.50\% baseline).
    Error bars show $\pm 1$ standard deviation over 3 runs.}
    \label{fig:ablation}
\vspace{-3 mm}
\end{figure}


\vspace{-4 mm}
\section{Conclusion}
\label{sec:conclusion}

We introduced a framework for incorporating logical constraints into flow-based generative models via training-time guidance and inference-time adjustment. By encouraging constraint satisfaction throughout the generative dynamics (rather than only at endpoints), LGVF learns constraint-aware vector fields. Across linear, nonlinear, and multi-region constraints in 2D, LGVF + Adjusted reduces violation rates by 59–82\% while preserving sample quality, with trajectory visualizations revealing emergent obstacle-avoidance behavior. We also observe strong scaling on a high-dimensional half-space constraint up to $d{=}100$, where LGVF + Adjusted achieves near-zero violations. Remaining challenges include extending to complex high-dimensional data (e.g., images and molecules), improving training-time robustness under highly non-convex constraint landscapes, and learning violation functions for implicit constraints alongside an analysis of the satisfaction–fidelity trade-off.

\newpage
\newpage
\bibliography{LGVF_cleaned_references}
\newpage
\appendix
\section{Theoretical Insights for Logic-Guided Vector Fields}
\label{app:theory}

This appendix provides supporting theory for the empirical behaviors observed in the main text:
(i) inference-time adjustment reliably reduces violation rates across constraint types;
(ii) late-time correction can preserve distributional fidelity; and (iii) multi-region constraints can
exhibit nonconvex ``potential field'' effects that make pure training-time guidance less reliable.

\subsection{Setup and assumptions}
Let $\phi:\mathbb{R}^d \to \{\mathrm{True},\mathrm{False}\}$ denote a logical constraint and
$\ell:\mathbb{R}^d\to\mathbb{R}_{\ge 0}$ a differentiable relaxation such that
$\ell(x)=0$ iff $\phi(x)=\mathrm{True}$. Define the feasible set $\mathcal{S}\doteq\{x:\ell(x)=0\}$.
Flow matching learns a time-dependent vector field $v_\theta(x,t)$ that induces the ODE
\begin{equation}
  \dot x_t = v_\theta(x_t,t), \qquad t\in[0,1].
  \label{eq:ode_base}
\end{equation}
LGVF uses inference-time logic adjustment (main text Eq.~(3)--(4)):
\begin{equation}
  \dot x_t = \tilde v(x_t,t)
  \doteq v_\theta(x_t,t) - \eta(t)\nabla \ell(x_t),
  \label{eq:ode_adjusted}
\end{equation}
where $\eta(t)\ge 0$ is a late-time schedule (e.g., $\eta(t)=0$ for $t\le t_0$). We use the following standard regularity conditions (local versions suffice on the sampled region).
\begin{itemize}
\item \textbf{(A1) Lipschitz drift.} $v_\theta(\cdot,t)$ is $L_v$-Lipschitz in $x$ for each $t$.
\item \textbf{(A2) Smooth violation.} $\ell$ is $L_\ell$-smooth on $\{x:\ell(x)>0\}$, i.e.,
$\|\nabla\ell(x)-\nabla\ell(y)\|\le L_\ell\|x-y\|$.
\item \textbf{(A3) Bounded gradients.} $\|\nabla\ell(x)\|\le G$ on the sampled region.
\end{itemize}
For hinge-type relaxations used in the experiments (half-spaces, rings, obstacles), $\ell$ is
differentiable a.e., and the results extend with subgradients.

\subsection{A Lyapunov view: why inference-time adjustment reduces violations}
The adjustment term in~\eqref{eq:ode_adjusted} performs gradient descent on $\ell$ while
transporting mass via $v_\theta$. The next lemma makes this explicit.

\begin{lemma}[Instantaneous violation rate under adjusted dynamics]
\label{lem:violation_derivative}
For almost every $t$ along a solution of~\eqref{eq:ode_adjusted},
\begin{equation}
  \frac{d}{dt}\ell(x_t)
  = \nabla\ell(x_t)^\top v_\theta(x_t,t)
  - \eta(t)\|\nabla\ell(x_t)\|^2.
  \label{eq:violation_derivative}
\end{equation}
\end{lemma}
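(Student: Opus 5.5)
The plan is a chain-rule computation along the trajectory, with an absolute-continuity argument to justify it where $\ell$ is not differentiable.

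\textbf{Regularity of the trajectory.} By (A1) the drift $v_\theta(\cdot,t)$ is Lipschitz, and $\eta$ is bounded and piecewise continuous. By (A2) $\nabla\ell$ is Lipschitz on $\{\ell>0\}$, and by (A3) it is bounded. So on the sampled region the adjusted field $\tilde v$ in~\eqref{eq:ode_adjusted} is locally bounded. Hence any (Carath\'eodory) solution $x_t$ is absolutely continuous, indeed Lipschitz in $t$, with $\dot x_t=\tilde v(x_t,t)$ for a.e.\ $t$.

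\textbf{Chain rule off the kink set.} At any time $t$ where $x_t$ is differentiable in $t$, $\ell$ is differentiable at $x_t$, and the ODE holds, the ordinary chain rule gives
\[
\frac{d}{dt}\ell(x_t)=\nabla\ell(x_t)^\top\dot x_t .
\]
Substituting $\dot x_t=v_\theta(x_t,t)-\eta(t)\nabla\ell(x_t)$ and expanding the inner product yields~\eqref{eq:violation_derivative} directly.

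\textbf{Exceptional times.} This is the main obstacle: we must show the set of times where the computation fails is null. There are three kinds of exceptional time.
\begin{itemize}
\item Times where $\dot x_t$ fails to exist or the ODE fails. These form a null set by absolute continuity.
\item Times where $x_t$ lies in the interior of $\mathcal S$, or where $\ell$ is smooth. These cause no problem: there $\ell$ is differentiable (with $\nabla\ell=0$ in the interior of $\mathcal S$), and both sides of~\eqref{eq:violation_derivative} vanish or agree.
\item Times where $x_t$ lies on the nondifferentiability set $N$ of $\ell$, e.g.\ $\partial\mathcal S$ for hinge losses.
\end{itemize}
For the third kind I would argue as follows. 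Since $\ell$ is locally Lipschitz (by (A3)) and $x_t$ is absolutely continuous, $t\mapsto\ell(x_t)$ is absolutely continuous and differentiable a.e. Split the bad times into two parts:
\begin{itemize}
\item If the trajectory spends positive measure of time in $N$ (e.g.\ sliding along $\partial\mathcal S$ where $\ell=0$), then at a.e.\ such time, $\ell(x_t)$ is constant on a set of density points. So its derivative is $0$ there, and with the convention $\nabla\ell=0$ on $\mathcal S$ (the natural subgradient choice noted after (A3)) the right-hand side is also $0$.
\item Otherwise the bad times form a null set.
\end{itemize}

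For a fully general a.e.\ differentiable $\ell$, I would invoke the nonsmooth chain rule for Lipschitz functions along absolutely continuous curves (Clarke gradient, with the selection $\nabla\ell$ evaluated at the trajectory). I would restrict explicitly to the hinge-type relaxations used in the experiments, where $N\subseteq\partial\mathcal S\cup\{\text{finitely many smooth hypersurfaces}\}$ and the above case split suffices.
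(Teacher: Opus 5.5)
Your proposal is correct and is the paper's argument. You apply the chain rule $\frac{d}{dt}\ell(x_t)=\nabla\ell(x_t)^\top\dot x_t$ for a.e.\ $t$ and substitute $\dot x_t=v_\theta(x_t,t)-\eta(t)\nabla\ell(x_t)$; the paper only asserts the a.e.\ chain rule, so your absolute-continuity and density-point treatment of kink times adds rigor rather than taking a different route.

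One overstatement: your constancy argument needs $\ell=0$ on the kink set, which (A2) guarantees by confining nondifferentiability to $\partial\mathcal S$, but it does not cover kinks inside $\{\ell>0\}$, such as the ring's origin, the obstacle centers, or an obstacle boundary lying inside another obstacle. So your closing claim that the case split suffices for all the hinge losses needs a different argument at those kinks: the right-hand side equals $\nabla\ell(x_t)^\top\dot x_t$, and $\dot x_t$ is tangent to the kink set at density times.
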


\begin{proof}[Proof sketch]
By the chain rule, $\frac{d}{dt}\ell(x_t)=\nabla\ell(x_t)^\top \dot x_t$ a.e.
Substitute $\dot x_t=v_\theta(x_t,t)-\eta(t)\nabla\ell(x_t)$.
\end{proof}

Equation~\eqref{eq:violation_derivative} decomposes violation evolution into (i) an \emph{alignment}
term $\nabla\ell^\top v_\theta$ that can be positive or negative, and (ii) a \emph{dissipative} term
$-\eta\|\nabla\ell\|^2$ that is always non-positive. This explains why adjustment is uniformly helpful
in the experiments: it adds a guaranteed descent component on $\ell$.

\begin{proposition}[Sufficient condition for monotone violation decrease]
\label{prop:monotone_decrease}
Suppose that for all times when $\ell(x_t)>0$ we have
\begin{equation}
  \eta(t)\ \ge\
  \frac{\nabla\ell(x_t)^\top v_\theta(x_t,t)}{\|\nabla\ell(x_t)\|^2}.
  \label{eq:sufficient_eta}
\end{equation}
Then $\frac{d}{dt}\ell(x_t)\le 0$ whenever $\ell(x_t)>0$, i.e., violations do not increase over time.
\end{proposition}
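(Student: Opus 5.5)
The plan is to plug the hypothesis directly into the identity of Lemma~\ref{lem:violation_derivative}. Fix a time $t$ at which $\ell(x_t)>0$ and at which the chain-rule identity~\eqref{eq:violation_derivative} holds; by the lemma this is almost every such $t$. For the quotient in~\eqref{eq:sufficient_eta} to make sense we need $\nabla\ell(x_t)\neq 0$. So the first step is to note this implicitly in the hypothesis, or to handle the case $\nabla\ell(x_t)=0$ separately: there the right-hand side of~\eqref{eq:violation_derivative} is identically zero, so $\frac{d}{dt}\ell(x_t)=0\le 0$ trivially.

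When $\nabla\ell(x_t)\neq0$, multiply~\eqref{eq:sufficient_eta} by $\|\nabla\ell(x_t)\|^2>0$. This gives $\eta(t)\|\nabla\ell(x_t)\|^2\ge \nabla\ell(x_t)^\top v_\theta(x_t,t)$. Substituting into~\eqref{eq:violation_derivative} yields
\[
\frac{d}{dt}\ell(x_t)=\nabla\ell(x_t)^\top v_\theta(x_t,t)-\eta(t)\|\nabla\ell(x_t)\|^2\le 0 .
\]
This is the pointwise claim.

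To justify the phrase ``violations do not increase over time,'' I would upgrade the a.e.\ derivative bound to monotonicity. The key observation is that $t\mapsto \ell(x_t)$ is absolutely continuous on any interval where $x_t$ stays in the region of (A2)/(A3). It is the composition of a locally Lipschitz function (by (A3), $\ell$ is $G$-Lipschitz on the sampled region) with the $C^1$/Lipschitz trajectory guaranteed by (A1). Then, on any interval $[s,u]$ on which $\ell(x_\tau)>0$, we have $\ell(x_u)-\ell(x_s)=\int_s^u \frac{d}{d\tau}\ell(x_\tau)\,d\tau\le 0$.

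Finally, I would handle excursions into the feasible set $\mathcal S$. Once $\ell=0$ it may become positive again only by continuity through zero, and the same argument applied from the re-entry time shows that it can never exceed the value it had before. This step needs care, since hinge-type $\ell$ is only a.e.\ differentiable. The main obstacle is exactly this regularity point: ensuring that the chain rule holds a.e.\ along the trajectory even though $\ell$ is non-smooth on a null set that the trajectory might traverse for positive time. I would resolve it by using Lipschitz absolute continuity of the composition together with a Clarke-subgradient version of the chain rule, as the paper's remark on subgradients suggests, or simply by stating the result at differentiability points as the lemma does.
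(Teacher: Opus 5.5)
Your proof is correct and follows the paper's argument: substitute the hypothesis~\eqref{eq:sufficient_eta} into the identity~\eqref{eq:violation_derivative} of Lemma~\ref{lem:violation_derivative}. Your extra steps (the $\nabla\ell(x_t)=0$ case and the absolute-continuity upgrade from an a.e.\ derivative bound to monotonicity) are sound refinements the paper's one-line sketch omits; note that your re-entry argument actually proves more, namely that $\mathcal S$ is forward invariant (applied from an exit time $s$ with $\ell(x_s)=0$ it gives $\ell(x_u)\le 0$), so a trajectory that reaches $\mathcal S$ can never become infeasible again.
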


\begin{proof}[Proof sketch]
Plug~\eqref{eq:sufficient_eta} into~\eqref{eq:violation_derivative}.
\end{proof}

\paragraph{Interpretation.}
Condition~\eqref{eq:sufficient_eta} states that the adjustment must dominate the component of
$v_\theta$ that points ``into'' infeasible regions (as measured by $\nabla\ell$). Late-time schedules
can satisfy this condition near $t=1$ even when it may fail early, consistent with the design
choice $\eta(t)=0$ for $t\le t_0$.

\subsection{Discrete-time (Euler) sampling: a one-step descent bound}
Algorithm~2 uses Euler integration with step size $\Delta t=1/K$:
\begin{equation}
  x_{k+1}
  = x_k + \Delta t\Big(v_\theta(x_k,t_k)-\eta_k\nabla\ell(x_k)\Big).
  \label{eq:euler_update}
\end{equation}

\begin{proposition}[One-step upper bound on violation]
\label{prop:one_step_bound}
Assume $\ell$ is $L_\ell$-smooth at $x_k$. Then
\begin{align}
  \ell(x_{k+1})
  \le\ &\ell(x_k)
  +\Delta t\,\nabla\ell(x_k)^\top v_\theta(x_k,t_k)
  -\eta_k\Delta t\,\|\nabla\ell(x_k)\|^2 \nonumber\\
  &\quad + \frac{L_\ell}{2}\Delta t^2\,
  \|v_\theta(x_k,t_k)-\eta_k\nabla\ell(x_k)\|^2.
  \label{eq:one_step_smooth}
\end{align}
\end{proposition}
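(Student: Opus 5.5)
The plan is to apply the standard descent lemma for $L$-smooth functions to the Euler update \eqref{eq:euler_update}. Write the step as $x_{k+1}=x_k+\Delta t\,d_k$ with $d_k\doteq v_\theta(x_k,t_k)-\eta_k\nabla\ell(x_k)$.

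\textbf{Step 1: the quadratic upper bound.} For an $L_\ell$-smooth $\ell$ (Lipschitz gradient, assumption (A2)), we have
\[
\ell(y)\le \ell(x)+\nabla\ell(x)^\top(y-x)+\tfrac{L_\ell}{2}\|y-x\|^2 .
\]
This follows from the integral form of Taylor's theorem. Write
\[
\ell(y)-\ell(x)=\int_0^1\nabla\ell(x+s(y-x))^\top(y-x)\,ds,
\]
then subtract $\nabla\ell(x)^\top(y-x)$ from both sides. Cauchy--Schwarz and the Lipschitz bound give $\|\nabla\ell(x+s(y-x))-\nabla\ell(x)\|\le L_\ell s\|y-x\|$, and integrating $s$ over $[0,1]$ yields the factor $\tfrac12$.

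\textbf{Step 2: substitution.} Take $x=x_k$ and $y=x_{k+1}$, so $y-x=\Delta t\,d_k$. The linear term becomes
\[
\Delta t\,\nabla\ell(x_k)^\top v_\theta(x_k,t_k)-\eta_k\Delta t\,\|\nabla\ell(x_k)\|^2 .
\]
The quadratic term is $\tfrac{L_\ell}{2}\Delta t^2\|d_k\|^2$. Together these give exactly \eqref{eq:one_step_smooth}. The linear term is the discrete analogue of Lemma~\ref{lem:violation_derivative}.

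\textbf{Main obstacle: the regularity hypothesis.} Step 1 needs the gradient Lipschitz bound along the entire segment $[x_k,x_{k+1}]$, not only at $x_k$. However, (A2) gives smoothness only on $\{\ell>0\}$, and hinge relaxations are nonsmooth at the boundary of $\mathcal{S}$. I would interpret ``$L_\ell$-smooth at $x_k$'' as holding on a neighborhood that contains the segment, which matches the paper's remark that local versions suffice.

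For the case where the segment crosses into $\mathcal{S}$, I would try a separate argument. For hinge losses of the form $\ell=\max(0,g)$ with $g$ smooth, bound $\ell(x_{k+1})\le \max(0,\bar g)$, where $\bar g$ is the quadratic upper model of $g$. When $\ell(x_k)>0$ we have $g(x_k)=\ell(x_k)$, so the right-hand side of \eqref{eq:one_step_smooth} is at least that model, and the bound still holds. The residual case $\ell(x_k)=0$ with $\nabla\ell(x_k)$ taken as zero is the one I expect to require the local-smoothness assumption explicitly.
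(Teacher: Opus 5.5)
Your Steps 1--2 are exactly the paper's argument: its sketch simply applies $\ell(y)\le\ell(x)+\nabla\ell(x)^\top(y-x)+\frac{L_\ell}{2}\|y-x\|^2$ with $y=x_{k+1}$. They are correct once $\ell$ is $L_\ell$-smooth on the whole segment $[x_k,x_{k+1}]$. You are right that smoothness ``at $x_k$'' does not suffice, a point the paper passes over.

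The problem is your patch for steps that cross into $\mathcal{S}$. Write $R_k$ for the right-hand side of \eqref{eq:one_step_smooth}. From $\ell(x_{k+1})=\max(0,g(x_{k+1}))\le\max(0,\bar g)$ and $\bar g\le R_k$ you only get $\ell(x_{k+1})\le\max\{0,R_k\}$. Concluding $\ell(x_{k+1})\le R_k$ also needs $R_k\ge 0$, and that fails when the step overshoots far enough into the feasible set.

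The half-space already breaks it. Take $\ell(x)=\max(0,b-a^\top x)$ with $\ell(x_k)>0$ and $s=x_{k+1}-x_k$. Then $R_k=\ell(x_k)-a^\top s+\frac{L_\ell}{2}\|s\|^2$.
\begin{itemize}
\item With $L_\ell=0$, which is legitimate under (A2) because $\ell$ is affine on $\{\ell>0\}$, this equals $b-a^\top x_{k+1}$. That is negative whenever $x_{k+1}$ is strictly feasible, while $\ell(x_{k+1})=0$.
\item For any $L_\ell>0$, the step $s=a/L_\ell$ gives $R_k=\ell(x_k)-\|a\|^2/(2L_\ell)<0$ as soon as $\ell(x_k)<\|a\|^2/(2L_\ell)$. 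This step is realizable, e.g.\ with $v_\theta(x_k,t_k)=0$ and $\eta_k=1/(L_\ell\Delta t)$, which is precisely the large-$\eta_k$ regime the paper's ``Consequence'' paragraph invokes.
\end{itemize}
Since $\ell\ge 0$, any $R_k<0$ is a counterexample. So across the kink the inequality is false, not merely unproven.

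What your argument actually establishes for hinge relaxations is the positive-part bound $\ell(x_{k+1})\le\max\{0,R_k\}$. That still gives strict decrease whenever $R_k<\ell(x_k)$, which is all the downstream descent claim needs.

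The case $\ell(x_k)=0$ fails in the same way. A strictly feasible $x_k$ can step out of $\mathcal{S}$ while $R_k=\frac{L_\ell}{2}\Delta t^2\|v_\theta(x_k,t_k)\|^2$, which is $0$ for the half-space. Both cases must therefore be excluded by assuming the segment lies in the region where $\ell$ is $L_\ell$-smooth, rather than handled by a separate argument.
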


\begin{proof}[Proof sketch]
Use smoothness: $\ell(y)\le \ell(x)+\nabla\ell(x)^\top(y-x)+\frac{L_\ell}{2}\|y-x\|^2$ with $y=x_{k+1}$.
\end{proof}

\paragraph{Consequence.}
For sufficiently small $\Delta t$ and/or sufficiently large $\eta_k$, the negative term
$-\eta_k\Delta t\|\nabla\ell(x_k)\|^2$ dominates the $O(\Delta t^2)$ remainder, yielding net
decrease in $\ell$ per step. This provides a direct explanation for the strong empirical reduction in
violati on rates under ``LGVF + Adjusted.''

\subsection{Geometric specialization to the paper's constraint families}
The hinge relaxations in the experiments admit especially transparent interpretations.

\paragraph{Half-space constraint.}
For $\phi(x)\equiv (a^\top x\ge b)$ and $\ell(x)=\max(0,b-a^\top x)$, when violated we have
$\nabla\ell(x)=-a$, hence the correction is $+\eta a$ pointing directly toward feasibility.
From Lemma~\ref{lem:violation_derivative}, whenever $a^\top x_t<b$,
\begin{equation}
  \frac{d}{dt}\ell(x_t)
  = -a^\top v_\theta(x_t,t) - \eta(t)\|a\|^2.
  \label{eq:halfspace_rate}
\end{equation}
If $a^\top v_\theta(x,t)\le M$ along trajectories, then choosing $\eta(t)\ge M/\|a\|^2+\delta$
implies $\frac{d}{dt}\ell(x_t)\le -\delta\|a\|^2<0$, yielding linear decrease in violation magnitude.
This aligns with the near-elimination of boundary violations in the linear case study.

\paragraph{Obstacle avoidance.}
For a forbidden ball $\|x-c\|<r$ with $\ell(x)=\max(0,r-\|x-c\|)$, when inside the obstacle
\begin{equation}
  \nabla\ell(x)= -\frac{x-c}{\|x-c\|},
  \qquad
  -\eta\nabla\ell(x)= \eta\,\frac{x-c}{\|x-c\|},
  \label{eq:obstacle_grad}
\end{equation}
so the adjustment is a purely radial ``repulsion'' term (scaled by $\eta$). This provides a direct
mechanistic account of the emergent avoidance behavior visualized in the paper.

\paragraph{Ring constraint.}
For $\phi(x)\equiv (r_{\min}\le \|x\|\le r_{\max})$ and
$\ell(x)=\max(0,r_{\min}-\|x\|)+\max(0,\|x\|-r_{\max})$, the gradient is radially outward when
$\|x\|<r_{\min}$ and radially inward when $\|x\|>r_{\max}$, i.e., the adjustment acts as a
\emph{radial clamp} that corrects inner/outer boundary violations, consistent with the ring study.

\subsection{Why late-time schedules preserve sample quality: a perturbation bound}
A recurring empirical pattern is that late-time correction reduces violations without noticeably
damaging distributional fidelity. A simple stability calculation formalizes this intuition. Let $x_t$ solve the base ODE~\eqref{eq:ode_base} and $\tilde x_t$ solve~\eqref{eq:ode_adjusted},
with $\eta(t)=0$ for $t\le t_0$ so that $x_{t_0}=\tilde x_{t_0}$.

\begin{proposition}[Trajectory deviation under late-time adjustment]
\label{prop:gronwall}
Under (A1)--(A3), for any $t\in[t_0,1]$,
\begin{equation}
  \|\tilde x_t - x_t\|
  \le \int_{t_0}^{t} e^{L_v(t-s)}\,\eta(s)\,G\,ds.
  \label{eq:gronwall}
\end{equation}
In particular, if $\eta$ is supported near $t=1$ and is bounded, the deviation at $t=1$ is controlled.
\end{proposition}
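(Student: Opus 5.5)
The plan is a standard Grönwall argument on the error $e_t \doteq \tilde x_t - x_t$, started at $t_0$, where $e_{t_0}=0$ because $\eta(t)=0$ for $t\le t_0$ and both trajectories share the same initial condition $x_0$.

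First, subtract \eqref{eq:ode_base} from \eqref{eq:ode_adjusted}. This gives, for a.e.\ $t\in[t_0,1]$,
\[
\dot e_t = \big(v_\theta(\tilde x_t,t)-v_\theta(x_t,t)\big) - \eta(t)\nabla\ell(\tilde x_t).
\]
Integrating from $t_0$ and using $e_{t_0}=0$ yields
\[
e_t=\int_{t_0}^t \big(v_\theta(\tilde x_s,s)-v_\theta(x_s,s)\big)\,ds-\int_{t_0}^t\eta(s)\nabla\ell(\tilde x_s)\,ds .
\]
Taking norms and applying (A1) to the first integrand and (A3) to the second gives the integral inequality
\[
u(t)\doteq\|e_t\|\le \int_{t_0}^t L_v\,u(s)\,ds+\int_{t_0}^t \eta(s)G\,ds .
\]
Applying (A3) requires $\tilde x_s$ to stay in the sampled region where the gradient bound holds. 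We take this as part of the standing assumption; since local versions suffice, it is enough that the adjusted trajectory remains in that region.

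Next, invoke the integral form of Grönwall's inequality with the nondecreasing forcing term $\beta(t)=\int_{t_0}^t\eta(s)G\,ds$. To get exactly the convolution form stated in \eqref{eq:gronwall}, I would avoid the cruder bound $\beta(t)e^{L_v(t-t_0)}$ and use a differential version instead. Since $u$ is absolutely continuous (because $e$ is), we have $\frac{d}{dt}\|e_t\|\le\|\dot e_t\|\le L_v\|e_t\|+\eta(t)G$ for a.e.\ $t$. Multiplying by the integrating factor $e^{-L_v(t-t_0)}$ and integrating from $t_0$ then gives
\[
u(t)\le\int_{t_0}^t e^{L_v(t-s)}\eta(s)G\,ds,
\]
which is \eqref{eq:gronwall}.

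The main obstacle is the step $\frac{d}{dt}\|e_t\|\le\|\dot e_t\|$ at points where $e_t=0$, since the norm is not differentiable there. I would handle it in one of two ways. One option is to work with $\sqrt{\|e_t\|^2+\varepsilon}$, which is differentiable everywhere, carry out the same integrating-factor argument, and let $\varepsilon\to0$. The other is to run Grönwall directly on the integral inequality for $u$ and note that the resolvent of the kernel $L_v$ is exactly $L_ve^{L_v(t-s)}$; after integrating by parts, this reproduces the convolution bound. Either route also needs the a.e.\ subgradient caveat for the hinge-type $\ell$. This is harmless, because only the bound $\|\nabla\ell\|\le G$ enters, not any smoothness, so (A2) is not actually needed for this proposition.

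Finally, for the ``in particular'' clause, suppose $\eta$ is supported in $[t_1,1]$ and bounded by $\eta_{\max}$. Then the bound at $t=1$ is at most $G\eta_{\max}\int_{t_1}^1e^{L_v(1-s)}ds=G\eta_{\max}\,(e^{L_v(1-t_1)}-1)/L_v$. This tends to $0$ as $t_1\to1$, which gives the claimed control of the deviation.
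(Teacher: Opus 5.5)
Your proposal is correct and follows the same route as the paper's sketch: you write the error dynamics for $e_t=\tilde x_t-x_t$ starting from $e_{t_0}=0$, bound the drift difference via (A1) and the correction via (A3), and close with Gr\"onwall; your integrating-factor form yields exactly the convolution bound stated. The extra care you take at points where $e_t=0$ is harmless but unnecessary, since $t\mapsto\|e_t\|$ is absolutely continuous with $\bigl|\tfrac{d}{dt}\|e_t\|\bigr|\le\|\dot e_t\|$ almost everywhere, simply because the norm is $1$-Lipschitz.
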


\begin{proof}[Proof sketch]
Let $\delta_t=\tilde x_t-x_t$. Then
$\dot\delta_t = v_\theta(\tilde x_t,t)-v_\theta(x_t,t)-\eta(t)\nabla\ell(\tilde x_t)$.
Apply Lipschitzness of $v_\theta$ and boundedness of $\nabla\ell$, then Gr\"onwall's inequality.
\end{proof}

\paragraph{Implication.}
Equation~\eqref{eq:gronwall} quantifies the design rationale for the schedule in Eq.~(4):
turning on correction only after $t_0$ and ramping smoothly keeps the corrected trajectory close to
the learned flow, while still yielding systematic descent on $\ell$ (Lemma~\ref{lem:violation_derivative}).

\subsection{Constraint satisfaction and MMD: support mismatch perspective}
The experiments report that MMD is comparable and can improve when violations are reduced.
A minimal formal support-mismatch observation explains why \emph{reducing mass in infeasible
regions cannot hurt matching in the limit of perfect modeling}. Let $q=p_{\mathrm{data}}$ and assume $q(\mathcal{S}^c)=0$. Consider a characteristic kernel
(e.g., Gaussian), for which $\mathrm{MMD}_k(p,q)=0$ iff $p=q$.

\begin{proposition}[Infeasible mass precludes zero MMD]
\label{prop:mmd_support}
If $q(\mathcal{S}^c)=0$ and $p(\mathcal{S}^c)>0$, then $\mathrm{MMD}_k(p,q)>0$ for any
characteristic kernel $k$. Equivalently, $\mathrm{MMD}_k(p,q)=0$ implies $p(\mathcal{S}^c)=0$.
\end{proposition}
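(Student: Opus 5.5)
The plan is a proof by contradiction that uses only the defining property of a characteristic kernel. Suppose $q(\mathcal{S}^c)=0$, $p(\mathcal{S}^c)>0$, and $\mathrm{MMD}_k(p,q)=0$. Because $k$ is characteristic, the map $p\mapsto \mathbb{E}_{x\sim p}[k(x,\cdot)]$ into the RKHS is injective on probability measures. Hence $\mathrm{MMD}_k(p,q)=\|\mu_p-\mu_q\|_{\mathcal{H}}=0$ forces $p=q$ as Borel probability measures. Evaluating both measures on $\mathcal{S}^c$ then gives $p(\mathcal{S}^c)=q(\mathcal{S}^c)=0$, which contradicts $p(\mathcal{S}^c)>0$. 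The equivalent formulation ($\mathrm{MMD}_k(p,q)=0\Rightarrow p(\mathcal{S}^c)=0$) is simply the contrapositive, so nothing extra is needed for it.

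The one technical point I would check is that $\mathcal{S}^c$ is a Borel set, so that $p(\mathcal{S}^c)$ and $q(\mathcal{S}^c)$ are defined and equality of measures can be applied to it. I would argue this directly. Under the standing assumption, $\ell$ is differentiable almost everywhere, and in every case considered it is in fact continuous (the hinge relaxations are continuous). Then $\mathcal{S}=\ell^{-1}(\{0\})$ is closed, and so $\mathcal{S}^c=\{x:\ell(x)>0\}$ is open and in particular Borel. If one does not want to assume continuity, measurability of $\ell$ is enough, and it follows from the a.e.\ differentiability setup. I would note this in one sentence.

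I would also remark that $\mathrm{MMD}_k\ge 0$ always holds, being an RKHS norm. So "not zero" is the same as "strictly positive", which is exactly the form of the conclusion. For the Gaussian kernel with $\sigma=1$ used in the experiments, I would cite the standard fact that Gaussian kernels on $\mathbb{R}^d$ are characteristic (Sriperumbudur et al.), so the proposition applies to the reported metric.

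I do not expect a real obstacle. The only care needed is to state precisely that "characteristic" means injectivity of the mean embedding on Borel probability measures, and to record that the equality $p=q$ obtained this way is an equality of measures that can be evaluated on the set $\mathcal{S}^c$.
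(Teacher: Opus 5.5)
Your proof is correct and matches the paper's argument: for a characteristic kernel, $\mathrm{MMD}_k(p,q)=0$ forces $p=q$, which contradicts $p(\mathcal{S}^c)>0=q(\mathcal{S}^c)$, and the second form is the contrapositive. Your measurability check goes beyond the paper. One caveat is that a.e.\ differentiability alone gives only Lebesgue, not Borel, measurability of $\ell$, but this costs nothing, because the hypothesis $p(\mathcal{S}^c)>0$ already presupposes that $\mathcal{S}^c$ is measurable.
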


\begin{proof}[Proof sketch]
For characteristic kernels, $\mathrm{MMD}_k(p,q)=0 \Leftrightarrow p=q$.
If $p(\mathcal{S}^c)>0$ while $q(\mathcal{S}^c)=0$, then $p\neq q$.
\end{proof}

\paragraph{Interpretation.}
When the data distribution lives on the valid region, any probability mass assigned to invalid
regions is necessarily ``wasted'' from the standpoint of distribution matching. Thus, decreasing
violation probability can reduce an unavoidable component of discrepancy (and may therefore
improve MMD), while the perturbation bound (Proposition~\ref{prop:gronwall}) explains why such
improvements can occur without major distortion of the learned distribution.

\subsection{Why multi-obstacle constraints can be harder: nonconvex ``potential field'' effects}
For conjunction constraints implemented by summing violations
$\ell(x)=\sum_{i=1}^m \ell_i(x)$ (main text), the correction direction becomes the sum of gradients
$\nabla\ell(x)=\sum_i \nabla\ell_i(x)$. With multiple disjoint obstacles, each $\nabla\ell_i$ can point
in a different direction, and cancellations can occur in free space.

\begin{remark}[Spurious equilibria under summed repulsions]
\label{rem:spurious}
Even when each $\ell_i$ is convex in a local neighborhood (e.g., hinge distance to an obstacle),
the sum $\ell=\sum_i \ell_i$ can have critical points away from obstacles where
$\sum_i \nabla\ell_i(x)\approx 0$. In such regions, the adjustment direction becomes small or poorly
aligned with any single ``escape'' direction, and interactions with the transport field $v_\theta$
may yield slow progress or locally undesirable flow shaping.
\end{remark}
This remark provides a theoretical lens on the obstacle study: multi-region constraints create a more
complex violation landscape, where purely training-time shaping may be brittle, while inference-time
adjustment still supplies direct descent on $\ell$ whenever violations are present
(Lemma~\ref{lem:violation_derivative}).

\noindent \textbf{Takeaway.} The results above formalize three high-level messages consistent with the empirical findings:
(i) the adjustment term is a Lyapunov-style descent mechanism for constraint violations;
(ii) late-time schedules control trajectory deviation and help preserve sample quality; and
(iii) multi-obstacle geometries can induce nonconvex gradient interactions, making the two-stage
strategy (training-time shaping + inference-time correction) robust across constraint types.

\end{document}